\begin{document}
\title{Coverage Path Planning For Minimizing Expected Time to Search For an Object With Continuous Sensing}
%
%
\author{Linh Nguyen\orcidID{0009-0009-3518-929X}}
\authorrunning{Linh Nguyen}
%
\institute{Stony Brook University, Stony Brook NY 11794, USA\\ \email{linh.nguyen.1@stonybrook.edu}}
\maketitle              
\begin{abstract}
We present several results of both theoretical as well as practical interests in optimal search theory. First, we propose the quota lawn mowing problem, an extension of the classic lawn mowing problem in computational geometry, as follows: given a quota of coverage, compute the shortest lawn mowing route to achieve said quota. We give constant-factor approximations for the quota lawn mowing problem.

Second, we investigate the expected detection time minimization problem in geometric coverage path planning with local, continuous sensory information. We provide the first approximation algorithm with provable error bounds that runs in pseudopolynomial time. Our ideas also extend to another search mechanism, namely visibility-based search, which is related to the watchman route problem. We complement our theoretical analysis with some simple but effective heuristics for finding an object in minimum expected time, on which we provide simulation results. 

\keywords{Quota lawn mowing problem \and Expected detection time .}
\end{abstract}
\section{Introduction and Related Work}
In this work, we study search and path planning problems in \textit{geometric domains}. Consider, for example, a remote-controlled camera inspecting a manufactured component for porosities, or a submarine using radar/sonar to search for a missing object, etc. We model this search, where the agent needs to be in proximity to the target for the sensor to detect it, by the \textit{lawn mowing problem}, a classic computational geometry problem posed by Arkin, Fekete, and Mitchell in~\cite{arkin2000approximation}. Given a planar region, compute a minimum-length route along which a circle or square (``mower'') can be slid so that every point in the region is ``mowed''. We investigate a natural generalization of the lawn mowing problem, in which we are given a \textit{quota} of coverage, and we are to compute the shortest route to achieve said quota. This is relevant in time-sensitive or resource-constrained scenarios, where complete coverage may not be feasible or necessary. Another related problem, which shares a similar detection mechanism with the lawn mowing problem is the honey-pot search problem \cite{dasgupta2006honey}, in which a searcher must maximize the probability of finding a target within the given time budget.

The equivalence of the lawn mowing problem for visibility-based search is the watchman route problem, introduced by Chin and Ntafos~\cite{chin1986optimum}, where the agent needs to see the target:``seeing'' means establishing a line of vision that lies entirely within the domain. Given a polygon, the watchman route problem asks for a minimum-length route that sees the whole polygon. 

Both the lawn mowing problem and the watchman route problem, when stated as search problems, are concerned with the \textit{worst-case} guarantee, as are many previous studies in coverage path planning, see the surveys \cite{cabreira2019survey,tan2021comprehensive}. We study the \textit{average-case}, i.e., we seek to optimize \textit{expected} duration of the search, which is more beneficial in the long run if the search is to be carried out on a regular basis. Several studies have looked at minimizing expected detection time in a graph. In \cite{berman2011optimal}, Bernman, Ianovsky and Krass proposed an exponential-time dynamic programming algorithm for finding a uniformly distributed target, which Teller, Zofi and Kaspi later extended to non-uniform distributions while also presenting a branch-and-bound algorithm along with several heuristics~\cite{teller2019minimizing}.  Sarmiento, Murrieta and Hutchinson were the first to investigate the expected detection time in visibility-based search~\cite{sarmiento2003efficient,sarmiento2004planning}. They proposed a heuristic that selects a set of points that collectively see the whole polygon (known as ``guards'') and visits those points in an ordering resulting from minimizing a utility function.

The problem of minimizing expected detection time is also related to the minimum latency problem (also referred to in the literature as the traveling repairman problem). Given some nodes in a graph or some points in the plane, compute a route that minimizes the average delay of all the nodes. Several approximation algorithms are known~\cite{archer2008faster,arora2003approximation,goemans1998improved}, including a polynomial-time approximation scheme~\cite{sitters2014polynomial}. Our problem is a continuous minimum-latency type problem. 

Most search problems involving path planning in a geometric domain, or even a discrete graph theoretic domain are known to be NP-hard~\cite{huynh2024optimizing,trummel1986complexity}. Optimal search is an overarching field of research that arises in numerous practical applications, including but not limited to: surveillance, robotics, search-and-rescue, etc. The theoretical basis for search theory was introduced in the seminal work of Koopman~\cite{koopman1956theory1,koopman1956theory2,koopman1957theory}. Depending on the task, perception, and target modeling, along with the objective function to be optimized, many different variations have been proposed and studied. We refer the reader to the surveys~\cite{chung2011search,hohzaki2016search,raap2019moving,sato2008path,yu2019survey} for a comprehensive review.

We present several novel results on optimal 
 search in a geometric setting:
\begin{enumerate}
    \item For the quota lawn mowing problem, we provide theoretical approximation algorithms, the running time of which depends on the approximation algorithm for quota-TSP/MST involved.
    \item We prove that minimizing the expected detection time is NP-hard in a simple polygon for both search mechanisms, lawn mowing and visibility-based search. Previously, only hardness in polygons with holes was known~\cite{sarmiento2003efficient}.
    \item We provide the first pseudopolynomial-time approximation algorithm with provable error bounds for minimizing the expected detection time for both search mechanisms.
    
\end{enumerate}

\section{Preliminaries}
We model the area to be searched by a closed planar region $R$, a subset of $\mathbb{R}^2$. If $R$ is connected and the boundary of $R$, $\partial R$, consists of a finite number of non-crossing line segments, then $R$ is a \textit{polygonal region}, or synonymously, a \textit{polygon with holes}. If $R$ has no holes, i.e., $R$ is simply connected, then we call $R$ a \textit{simple polygon}. We assume $R$ is a polygon with holes with $n$ vertices, though our results can be generalized to more general regions (with regular curved arcs on the boundary or multiple connected components).

We are given a pointwise \textit{robot/cutter}. Throughout this paper, we shall use the term ``cutter'' when discussing the lawn mowing problem and the term ``robot'' when discussing searching. Assume that the operational area of the robot/cutter, $\chi$, is either an axis-aligned square or a circle centered on it. Without loss of generality, we can scale $R$ so that $\chi$ is a unit square (side length 1) or a unit circle (radius 1). If the robot/cutter traverses a trajectory $\gamma$, the \textit{area of coverage}, denoted $C(\gamma)$, is the Minkowski sum of $\gamma$ with the operational area: $C(\gamma) = (\gamma + \chi) \cap R$. Note that, in the lawn mowing problem, $\gamma$ need not stay within $R$. See Figure~\ref{fig:area_of_coverage} for an illustration.

\begin{figure}[h]
        \centering
        \includegraphics[width=0.75\textwidth]{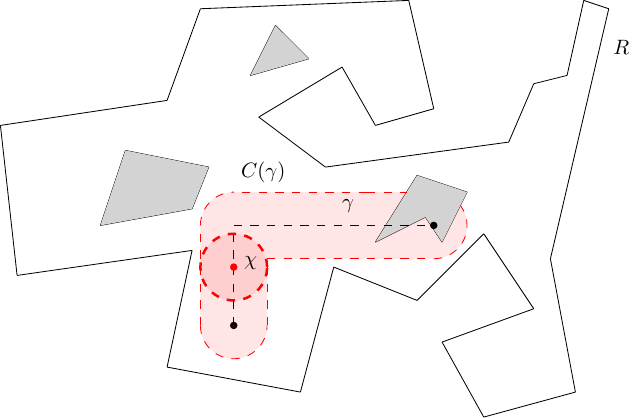}
        \caption{The area of coverage $C(\gamma)$ (red) by a robot with circular operational area $\chi$ traveling along $\gamma$.}
        \label{fig:area_of_coverage}
\end{figure}

We use $|\cdot|$ to denote Euclidean measure of geometric objects (e.g., length or area), as well as cardinality of finite sets, the meaning should be clear from the context. First, we consider the following problem:
\paragraph{\textbf{The quota lawn mowing problem}}Given an area quota $0 \le A \le |R|$, compute the shortest tour (cycle) $\gamma$ such that $|C(\gamma)| \ge A$.

This can be seen as a generalization of the lawn mowing problem, which is NP-hard even if the input region is a simple polygon \cite{arkin2000approximation}. Moreover, the lawn mowing problem is algebraically hard~\cite{feketecircling}. The same hardness for the quota lawn mowing problem follows naturally.

We consider how the lawn mower analogy can model some geometric optimal search problems. Suppose there is a \textit{target} located within $R$. We assume that the location of the target is uniformly distributed, though our results can be easily adapted to non-uniform distributions, that is, when we have prior knowledge of some regions more likely to contain the target, by simply replacing area with probability measure. A robot, given a starting point $s$ and equipped with a sensor, looks for the target. Think of a camera scanning a manufactured part for a small dent, or a submarine using radar/sonar to look for a missing ship/airplane on the ocean floor.

If the sensor can detect the target within a circle or square centered around the robot, we call it the \textit{lawn mowing search mechanism}. If the sensor is visibility-based with infinite range, we have the classic watchman route problem (in which the geometry of the region $R$ not only limits visibility but also movements), which we refer to as \textit{visibility-based search mechanism}. 

The lawn mowing problem and the watchman route problem are concerned with the worst-case scenario of the search process, where the objective is to minimize the distance traveled to guarantee detection of the target, i.e. every point in the domain must be covered. The quota lawn mowing problem introduced earlier, can be interpreted as that of computing a minimum route to guarantee a certain probability of detection. In many applications, we are also interested in an average-case analysis, thus we study the \textit{expected detection time} as the objective function to optimize. Suppose the robot follows a trajectory $\gamma$ such $C(\gamma) = R$. We define the random variable $T$ to be the time at which the target is detected by the robot. We assume that the robot travels at a constant speed of 1, without loss of generality, so that distance and time are equivalent. Denote by $C(\gamma, t)$ the partial area of coverage accumulated by the robot by time $t$ as it travels along $\gamma$ (note that $C(\gamma, |\gamma|) = C(\gamma) = R$). Then the probability of detection at time $t$ is equal to the fraction of area covered: $\displaystyle\text{Pr}_\gamma(T \le t) = \frac{|C(\gamma,t)|}{|R|}.$
Thus, since $T > 0$, the expected value of $T$ can be written as
\begin{align*}
    E[T\mid\gamma] &= \int_0^\infty\text{Pr}_\gamma(T > t)dt = \int_0^\infty\left(1 - \frac{|C(\gamma,t)|}{|R|}\right)dt.
\end{align*}

\paragraph{\textbf{The expected detection time minimization problem}}Compute a route $\gamma$ that minimizes $E[T\mid\gamma]$.

    \label{fig:diff_min_length_min_expectation}

\section{Lawn Mowing Problem with a Quota}
We begin by assuming the operational area of the cutter, $\chi$, is a closed unit square. As the cutter moves, $\chi$ translates but does not rotate.

To achieve a quota of coverage, the cutter must sweep over every point in a subset of the domain $R$. The uncountably infinite number of points that the cutter must cover makes for a computational challenge. We overcome this challenge (at the cost of an approximation factor) by an appropriate discretization of $R$. We denote by $\mathcal{P}$ the set of pixels whose vertices are integer grid points that have a non-empty intersection with $R$. If the cutter is given a starting point $s$, the grid inducing $\mathcal{P}$ is shifted if necessary so that $s$ is at the center of a pixel. Denote by $\mathcal{G}$ the \textit{dual grid graph} of $\mathcal{P}$. Each node in $\mathcal{G}$ corresponds to the center point of a pixel in $\mathcal{P}$, two nodes are adjacent if their pixels share a side edge (Figure~\ref{fig:dual_grid}). Let $N = |\mathcal{P}|$, then we can identify $\mathcal{G}$ and $\mathcal{P}$ in $O(N + n\log n)$ from a description of $R$ as an $n$-vertex polygonal domain.

We consider the input complexity to depend on the \textit{magnitude} (the bit complexity) of the coordinates of the vertices of $R$. This allows for algorithms whose running time is \textit{pseudopolynomial} (depending on the input size as well as magnitude of the input). This is because even for a relatively combinatorially simple region, a quota lawn mowing tour can have a very large number of turns. This was first observed in \cite{arkin2000approximation}.

\begin{figure}[h]
        \centering
        \includegraphics[width=0.6\textwidth]{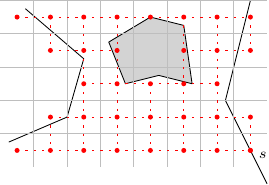}
        \caption{The pixels $\mathcal{P}$ (grey) and the dual grid graph $\mathcal{G}$ (red).}
        \label{fig:dual_grid}
\end{figure}

Let $\gamma^*$ denote an optimal lawn mowing tour for a given quota of coverage $A$. To approximate $\gamma^*$, we require the following lemma.

\begin{lemma}
\label{lem:discretize_to_grid}
    For a unit square cutter, there exists a tour $\gamma_{\mathcal{G}}$ whose vertices are a subset of the vertices of $\mathcal{G}$ such that $|\gamma_{\mathcal{G}}| \le O(1)|\gamma^*|$ and $C(\gamma^*) \subseteq C(\gamma_{\mathcal{G}})$.
\end{lemma}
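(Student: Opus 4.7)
My plan is to construct $\gamma_{\mathcal{G}}$ by visiting, in an order induced by $\gamma^*$, the center of every pixel in the set $\mathcal{S} = \{P \in \mathcal{P} : P \cap C(\gamma^*) \neq \emptyset\}$. The coverage requirement $C(\gamma^*) \subseteq C(\gamma_{\mathcal{G}})$ will then be automatic: any point $q \in C(\gamma^*) \subseteq R$ lies in some pixel $P \in \mathcal{P}$, which by definition belongs to $\mathcal{S}$, and the unit-square cutter placed at the pixel's center $c_P$ covers exactly the unit square $P \ni q$. So the substance of the proof is the length bound.

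Before building the tour I would establish two ingredients. \emph{(i)} Every $c_P$ with $P \in \mathcal{S}$ lies within $L_\infty$-distance $1$ of $\gamma^*$: from $q \in P \cap C(\gamma^*)$ and a witness $p \in \gamma^*$ with $q \in p + \chi$, both $\|c_P - q\|_\infty$ and $\|q - p\|_\infty$ are at most $\tfrac12$, giving $\|c_P - p\|_\infty \le 1$ by the triangle inequality. \emph{(ii)} $|\mathcal{S}| = O(|\gamma^*| + 1)$: the pixels of $\mathcal{S}$ are pairwise-disjoint unit-area squares all contained in the Minkowski sum $\gamma^* + [-\tfrac32, \tfrac32]^2$, and a standard tube-area estimate shows that this sum has area $O(|\gamma^*| + 1)$.

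With these in hand, I construct the tour explicitly. Parametrize $\gamma^*$ by arc length; for each $P \in \mathcal{S}$, let $t_P$ be the first time at which $\|\gamma^*(t_P) - c_P\|_\infty \le 1$, which exists by (i). Order the pixels as $P_1, \ldots, P_k$ so that $t_{P_1} \le \cdots \le t_{P_k}$, and let $\gamma_{\mathcal{G}}$ be the closed polygonal tour through $c_{P_1}, c_{P_2}, \ldots, c_{P_k}, c_{P_1}$. The triangle inequality, together with $\|c_{P_i} - \gamma^*(t_{P_i})\| \le \sqrt{2}$, bounds each Euclidean edge length by $2\sqrt{2} + (t_{P_{i+1}} - t_{P_i})$, so summing gives $|\gamma_{\mathcal{G}}| \le 2\sqrt{2}\,k + |\gamma^*| = O(|\gamma^*| + 1)$, which is $O(|\gamma^*|)$ once $|\gamma^*|$ is bounded away from $0$; the case $|\gamma^*| \approx 0$ is trivial since then $k = O(1)$.

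The main obstacle, I expect, is step (ii): the counting bound $|\mathcal{S}| = O(|\gamma^*| + 1)$ is intuitive but needs a careful Minkowski/tube-area argument that remains valid even when $\gamma^*$ is highly self-intersecting. A minor subtlety is that $\gamma^*$ may pass through points outside $R$ whose containing pixel is not in $\mathcal{P}$; this causes no difficulty, since the argument only uses pixels in $\mathcal{S} \subseteq \mathcal{P}$, whose centers lie near (but not on) $\gamma^*$. Finally, if the lemma requires the edges of $\gamma_{\mathcal{G}}$ themselves to lie in $\mathcal{G}$ rather than just its vertices, each straight-line edge between consecutive centers can be replaced by a shortest grid path at a further constant-factor cost.
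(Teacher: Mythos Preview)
Your argument is correct and self-contained: the coverage claim is immediate, the $L_\infty$-distance bound in (i) is right, the pixel count in (ii) follows from the tube-area estimate you sketch (partitioning $\gamma^*$ into unit-length pieces and bounding the Minkowski sum by $O(|\gamma^*|+1)$ area), and the first-hit-time ordering gives the telescoping length bound $2\sqrt{2}\,k + |\gamma^*|$. The short-tour caveat you flag is exactly the same edge case the paper also defers.

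The paper takes a different, shorter route: it observes that $\gamma^*$ is itself an optimal \emph{full} lawn mowing tour of the subregion $C(\gamma^*)$, and then simply invokes Theorem~3 of Arkin--Fekete--Mitchell, which already says that any lawn mowing optimum can be approximated on the integer grid within a factor of $3$ (rectilinear) or $6/\sqrt{2+\sqrt{2}}$ (arbitrary motion, on the diagonal-augmented grid). So the paper's proof is essentially a one-line reduction to a known result, buying explicit small constants and a tour that genuinely lives on the edges of $\mathcal{G}$; your construction is more elementary and avoids the external citation, at the cost of a larger (unspecified) $O(1)$ constant and edges that are straight-line rather than grid edges---which, as you note, can be fixed by a further constant-factor snap, and which in any case suffices for the Euclidean quota-TSP reduction that follows.
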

\begin{proof}
    Notice that $\gamma^*$ is necessarily the shortest (fully covering) lawn mowing tour of $C(\gamma^*)$, a subregion of $R$. According to \cite[Theorem 3]{arkin2000approximation}
    \begin{enumerate}
   
        \item[(i)] If the cutter is limited to rectilinear motion ($\gamma^*$ consists of only axis-aligned segments), there exists $\gamma_{\mathcal{G}}$ within $\mathcal{G}$ of length no greater than $3|\gamma^*|$ covering $C(\gamma^*)$.
        \item[(ii)] If the cutter can translate arbitrarily, we augment $\mathcal{G}$ with diagonals: join two center points with a diagonal (45 degrees slope) if they are separated by distance $\sqrt{2}$. There exists $\gamma_{\mathcal{G}}$ within the diagonal-augmented grid of length no greater than $\frac{6}{\sqrt{2 + \sqrt{2}}}|\gamma^*|$ covering $C(\gamma^*)$.
    \end{enumerate}
    When $|\gamma^*|$ is small, we can employ the strategy for short covering tours in \cite{arkin2000approximation}, the details of which we omit in this version.
    \hfill$\blacksquare$
\end{proof}
An algorithm that computes an optimal (quota) lawn mowing tour will generally need to select turning points from a continuum rather than a discrete, finite set. This presents significant computational challenges (recall the algebraic hardness of the problem). Lemma \ref{lem:discretize_to_grid} establishes that, for purposes of constant-factor approximations, it suffices to discretize the problem into a grid graph.

 For each node $p$ of $\mathcal{G}$, the center point of the pixel dual to $p$, let $r(p)$ be the area of the pixel that lies inside $R$, i.e., $r(p) = |p\cap R|$. This allows us to treat the lawn mowing problem with a quota as an instance of the quota TSP (a special case of the Prize Collecting TSP~\cite{ausiello2018prize}):

 \paragraph{\textbf{The quota TSP}} Find the shortest tour through a subset of pixel center points such that the total reward of the tour is no smaller than 
 $A$. 
 
 By Lemma \ref{lem:discretize_to_grid}, a $c$-approximation to the quota TSP yields a $3c$-approximation in the case of rectilinear motion and a $\frac{6c}{\sqrt{2 + \sqrt{2}}}$-approximation in the case of arbitrary motion, to the quota lawn mowing problem.

 We can approximate the quota TSP problem with a factor of 5 using the methods in \cite{ausiello2018prize} with polynomial (in $N$) running time. Further, we can achieve a factor of $(1 + \varepsilon)$ for any $\varepsilon > 0$ by adapting the methods for geometric $k$-TSP (Arora \cite{arora1996polynomial} or Mitchell \cite{mitchell1999guillotine}), a special case of quota TSP where the reward of every point is unary and the quota is $k$ as follows:
 \begin{itemize}
     \item[(1)] Scale the rewards and quota to be integers. Specifically, let $M = 10^D$, where $D$ is the maximum number of decimal digits used to represent the fraction part of any $r(p)$ or $A$. Then $M = D\log_2(10) = O(B)$, where $B$ is the bit complexity of the problem instance.
     
     We set $\overline{A}:=MA$ and $\overline{r}(p):=Mr(p)$ for every $p\in\mathcal{P}$.
     \item[(2)] For each $p$, create $\overline{r}(p)$-many duplicates of $p$ at the same location as $p$ (distance 0 away from $p$). This create an $O(NM) = O(NB)$ instance of $k$-TSP.
     \item[(3)] Approximate the optimal $k$-TSP tour for the instance created with $k=\overline{A}$.
 \end{itemize}
 The running time of this approach is $(NB)^{O(1/\varepsilon)}$.
 
 Similarly, we can use the same transformation to the $k$-MST problem: Find the minimum cost tree spanning at least $k$ vertices. Doubling an $\alpha$-approximate tree (traversing each edge twice in opposite directions) gives a $2\alpha$-approximate tour. Several approximations with small factors are known for the $k$-MST problem, the current best factor is $2$ for general graphs \cite{garg2005saving} and $(1 + \varepsilon)$ for points in the plane \cite{arora1996polynomial,mitchell1999guillotine}.
 \paragraph{Circular cutter} If the cutter is a circle, rather than square pixels, we consider hexagonal pixels, each of diameter 2. We join two center points by an edge if their respective hexagons share part of their boundary, creating a triangular grid graph (Figure~\ref{fig:hexagonal_grid}).

\begin{figure}[h]
        \centering
        \includegraphics[width=0.6\textwidth]{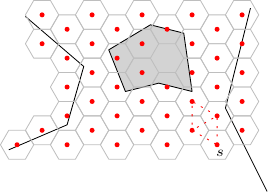}
        \caption{The hexagonal pixels for the case of a circular cutter.}
        \label{fig:hexagonal_grid}
\end{figure}
A constant-factor approximation to an optimal quota lawn mowing tour $\gamma^*$, can once again be found within the triangular grid graph. Specifically, one no longer than $2\sqrt{3}|\gamma^*|$ (see \cite[Theorem 3]{arkin2000approximation}). The remainder of the approximation method follows through like the case with square cutter.

\section{Minimizing Expected Time to Search For an Object}
We turn our attention to the problem of minimizing expected search time, which has numerous practical implications. The findings on the quota lawn mowing problem while are of great independent interests, also allow us to find approximation algorithms with provable guarantees for the expected search time minimization problem.
\subsection{Lawn mowing search mechanism}
First, we rewrite the formula for the expected value of $T$, the amount of time (or distance, interchangably, because we assume the robot travels at a fixed speed of 1) before the target is detected by the robot as follows:
\begin{align*}
    E[T\mid\gamma]& = \int_0^\infty\left(1 - \frac{|C(\gamma,t)|}{|R|}\right)dt= \frac{1}{|R|}\int_0^\infty\left(\int\limits_{x\notin C(\gamma, t)}dx\right)dt.
\end{align*}
For the purpose of optimizing expected detection time in a given region $R$, we can drop the denominator $|R|$. 
Notice that for each infinitesimal area element $dx \in R$, the integral above is taken over the region $dx \in R \setminus C(\gamma,t)$. Hence, we can reverse the order of integration to be
\begin{align*}
    \int\limits_{x\in R}\left(\int_0^{t:x\notin C(\gamma,t)}dt\right)dx=\int\limits_{x\in R}t_\gamma(x)dx.
\end{align*}
where $t_\gamma(x)$ is the time when the accumulated area of coverage of the robot first contains $x$ when traveling along $\gamma$. It may be more intuitive to consider this formula in the discrete case, where the location of the target is given as a uniform distribution on $m$ discrete points $x_1, \ldots, x_m$. Suppose $\gamma$ visits those points in that order, then
\begin{align*}
    E[T\mid \gamma] &= \sum_{i=1}^mt_\gamma(x_i)\text{Pr}(\text{target is found when $\gamma$ reaches }x_i)= \frac{1}{m}\sum_{i=1}^mt_\gamma(x_i).
\end{align*}
In fact, $t_\gamma(x_i)$ is known as the \textit{latency} of $x_i$.

\begin{theorem}
    Minimizing $E[T\mid\gamma]$ is NP-hard, even in a simple polygon.
    \label{thm:min_E_NP_hard}
\end{theorem}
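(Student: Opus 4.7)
The plan is to reduce from the minimum latency problem on weighted trees, which is known to be NP-hard. Given a weighted tree $\mathcal{T} = (V,E)$ rooted at a starting vertex $s$ together with a latency bound $\Lambda$, I would construct a simple polygon $P$ whose expected-detection-time objective encodes the tree-latency objective.

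For the construction, I would first embed $\mathcal{T}$ in the plane without crossings --- possible since $\mathcal{T}$ is planar --- in a sufficiently ``stretched'' layout so that distinct tree vertices lie far apart in the plane. Then I would thicken each edge of weight $\ell$ into a corridor of length $\approx \ell$ and small width $w$, and expand each vertex $v$ into a square room $R_v$ of side $a$ that amply contains the unit sensor. Since $\mathcal{T}$ is simply connected, the resulting thickened region is a simple polygon (no holes). I would place the robot's starting point at the center of the room corresponding to $s$ and choose $a, w$ so that the total area of $P$ is dominated by the rooms, and so that the shortest route inside $P$ from $R_u$ to $R_v$ has length equal to $d_{\mathcal{T}}(u,v)$ up to a controllable error.

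Under this construction, I expect that the expected detection time of any tour $\gamma$ is, up to lower-order terms, proportional to $\sum_{v \in V} t_\gamma(v)$, which is precisely the minimum-latency objective on $\mathcal{T}$. Consequently, $\mathcal{T}$ admits an ordering with total latency at most $\Lambda$ if and only if there is a tour $\gamma$ in $P$ with $E[T\mid\gamma]$ below a corresponding threshold, and NP-hardness follows.

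The main obstacle will be ruling out ``shortcut'' strategies in $P$ that might bypass the intended corridors or cover several rooms out of order by slicing through open space. I would handle this by stretching the planar embedding enough that the in-polygon Euclidean distance between any two disjoint rooms faithfully matches the tree distance in $\mathcal{T}$, and by making $a$ small enough relative to the edge lengths that the time spent finishing coverage inside each room is negligible compared to the inter-room latencies. The analogous hardness for the visibility-based search mechanism should follow from the same construction by replacing each room with a small ``pocket'' that can only be seen after entering through its corridor.
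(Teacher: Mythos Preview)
Your reduction has a genuine gap that stems from the movement model. In the lawn mowing search mechanism the trajectory $\gamma$ is \emph{not} required to stay inside $R$; only the covered area $C(\gamma)=(\gamma+\chi)\cap R$ is restricted to $R$. Consequently the robot can travel in a straight line between any two rooms $R_u$ and $R_v$, ignoring your corridors entirely. Your proposed fix---stretching the embedding so that ``in-polygon Euclidean distance'' matches $d_{\mathcal{T}}(u,v)$---does not help, because the relevant distance is the ordinary Euclidean distance in the plane, and a general weighted-tree metric simply cannot be realized isometrically by points in $\mathbb{R}^2$ (already a star with four unit-weight leaves fails). Any non-degenerate bend in the embedded tree gives the robot a strict shortcut, so the latency of the resulting tour no longer encodes the tree-latency objective and the reduction from tree-MLP collapses.

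The paper sidesteps this by reducing from the \emph{Euclidean} minimum latency problem instead: it places tiny $\varepsilon$-rooms at the given Euclidean points, connects them by $\varepsilon^2$-thin hallways solely to make the region a simple polygon, and then exploits the fact that the robot may move freely outside $P$. Since straight-line travel is allowed, the inter-room distances are exactly the Euclidean distances of the original instance, and prioritizing the (area-dominant) rooms reproduces the Euclidean MLP objective. Your construction is, however, precisely the right idea for the \emph{visibility-based} mechanism, where the robot is confined to $P$; indeed the paper uses the tree-MLP reduction for that theorem. So your last paragraph has the roles reversed: tree-MLP is the source for visibility-based search, and Euclidean MLP is the source for lawn mowing.
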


\begin{proof}
    Our reduction is from the Minimum Latency problem: Given $n$ points $p_1, \ldots, p_n$ on the Euclidean plane, compute a tour $\gamma$ that minimizes $\sum\limits_{i=1}^{n}t_\gamma(p_i)$, where $t_\gamma(p_i)$ is the distance traveled before first visiting $p_i$. The Minimum Latency problem is known to be as hard as the Euclidean TSP~\cite{blum1994minimum}. Our reduction works for both rectilinear and unrestricted movement, square and circular searcher.
    Given a Minimum Latency problem instance, scale the instance so that the side length (or radius) of the area of coverage of the robot is some $\varepsilon$ infinitesimally small. We construct a simple polygon $P$ as shown in Figure~\ref{fig:reduction_from_MLP}. First, interconnect the points by a tree (e.g., the minimum spanning tree). Surround each point with a small square room of side-length $\varepsilon$. Each edge of the tree is a narrow hallway of width $\varepsilon^2$.
    \begin{figure}[h]
        \centering
        \includegraphics[width=0.75\textwidth]{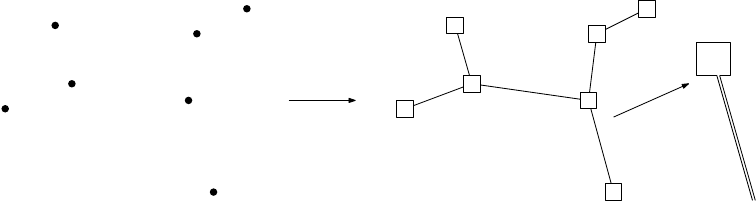}
        \caption{The construction used in showing NP-hardness of minimizing $E[T\mid\gamma]$.}
        \label{fig:reduction_from_MLP}
    \end{figure}
    
    Recall that in the lawn mowing search mechanism, the movement of the robot is not constrained to stay within $P$. Consequently, an expected detection time minimizing tour prioritizes the rooms before attempting to cover all the area of the hallways. Such a tour must visit the rooms in the order that a latency-minimizing tour visits the corresponding points. \hfill$\blacksquare$
\end{proof}
\subsubsection{Approximation algorithm} We give an approximation algorithm for minimizing $E[T\mid\gamma]$, utilizing the solution to the quota lawn mowing problem as a subroutine. In the reduction from the quota lawn mowing problem to the $k$-TSP, we multiplied the area of each pixel with $M = 10^D$, where $D$ is the decimal precision, thus effectively multiplied the area of $R$ by $M$. Recall that $M = O(B)$, where $B$ is the bit complexity.

For any $A\in\{1, 2, \ldots, M|R|\}$, let $L(A)$ be length of the shortest tour that covers an area of $A$, which means a $c$-approximation algorithm would give a tour no longer than $cL(A)$ for area quota $A$. For $j= 1, \ldots, O(\log N)$ (suppose $N > 1$, else the problem is trivial), denote by $A_j$ the maximum value such that $L(A_j) \le 2^j$. For $j= 1, \ldots, O(\log N)$, we can find $\overline{A_j}$, the maximum area quota for which the $c$-approximation returns a tour no longer than $c2^j$, by a binary search on $\{1, 2, \ldots, M|R|\}$ as the input quota to the $c$-approximation. Clearly, $A_j \le \overline{A_j}$.

We state our approximation algorithm for finding a route that minimizes expected detection time as follows: For $j = 1, \ldots, O(\log N)$, follow the tour (returned by the $c$-approximation) associated with area quota $\overline{A_j}$, return to the starting point before each increment of $j$. We denote this tour by $\overline{\gamma}$.

\subsubsection{Running time} The running time of the above approximation algorithm is the running time of the $c$-approximation to the quota lawn mowing problem (which depends on the quota TSP/MST or $k$-TSP/MST heuristic employed) multiplied by $O(\log N)O(\log M|R|) = O(\log N(B + \log N))$.

\subsubsection{Analyzing the approximation factor}
Let $\gamma^*$ be an expected detection time optimal tour, and let $\text{Pr}_{\gamma^*}(T > t) = f_{\gamma^*}(t)$. Specifically, $f_{\gamma^*}(t)$ is the uncovered fraction of area after the robot follows along $\gamma^*$ for a time/distance of $t$. Note that $f_{\gamma^*}(t)$ is continuous and monotone (decreasing).

We show another way to compute $E[T\mid \gamma^*]$, which will be helpful in our analysis of the approximation factor. Draw the graph of $f_{\gamma^*}(t)$ on the Cartesian plane, we wish to compute the area bounded by its curve and the 2 axes. If we switch the 2 axes and consider $t$ a function of $p$, the area under the curve stays the same. For any $0 \le 1 \le p$, define the function $f_{\gamma^*}'(p) = t$, where $t$ is the minimum latency along $\gamma^*$ so that the uncovered fraction of area is no smaller than $p$. Then
\[\int_0^{|\gamma^*|}f_{\gamma^*}(t)dt = \int_0^1f_{\gamma^*}'(p)dp.\]
See Figure~\ref{fig:integral} for an illustration.

\begin{figure}[h]
        \centering
        \includegraphics[width=\textwidth]{
        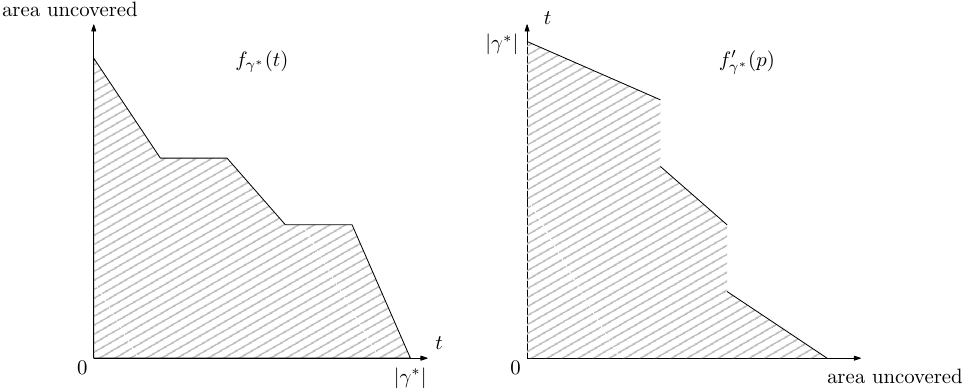}
        \caption{The graph of $f(t)$ (left) and the graph of $f'(p)$ (right), the latter is acquired from switching the two axes, effectively ``flipping'' $f(t)$ and removing some vertical segments. Both envelope the same amount of area of the Cartesian plane (shaded).}
        \label{fig:integral}
\end{figure}

\paragraph{Remark} We can also use a standard change of variable in Lebesgue integration, however given that the focus of the paper is not on theoretical analysis, we opt for a more elementary and visual approach.

For some $0 \le p \le 1$, suppose $j \ge 2$ is such $A_{j-1} < (1 - p)|R| \le A_j$. Consider the subpath of $\gamma^*$ from the starting point until an area of $(1 - p)|R|$ is covered, the length of which is $f'_{\gamma^*}(p)$ by definition. Since doubling that subpath gives a tour that covers an area of $(1 - p)|R|$, we have $2^{j-1}\le 2f'_{\gamma^*}(p)$. 

On the other hand, $\overline{\gamma}$ is a concatenation of tours of length no longer than $c2^j$ covering an area of $\overline{A_j} \ge A_j$ for $j = 1, 2, \ldots$. Thus, the distance traveled along $\overline{\gamma}$ by the robot before covering an area of $p$ is no longer than $\sum\limits_{k\le j}c2^k \le 8c2^{j-2}$. This implies, $f_{\overline{\gamma}}(p) \le 8cf_{\gamma^*}(p)$ for all $(1 - p) > \frac{A_1}{|R|}$. Consider that
\begin{align*}
    \int_0^1f_{\overline{\gamma}}'(p)dp &= \int_0^{1 - \frac{A_1}{|R|}}f_{\overline{\gamma}}'(p)dp + \int_{1 - \frac{A_1}{|R|}}^1f_{\overline{\gamma}}'(p)dp.
\end{align*}
We have
\begin{align}
    \label{eqn:bound_multiplicative}
    \int_0^{1 - \frac{A_1}{|R|}}f_{\overline{\gamma}}'(p)dp \le 
    8c\int_0^{1 - \frac{A_1}{|R|}}f_{\gamma^*}'(p)dp \le 8c\int_0^{1}f_{\gamma^*}'(p)dp.
\end{align}
When the fraction of uncovered area is between $1$ and $1 - \frac{A_1}{|R|}$ (meaning the covered area is between 0 and $A_1$), $f_{\overline{\gamma}}'(p)$ is no greater than $2c$, since following along $\overline{\gamma}$ by a distance of $2c$ covers an area of at least $A_1$. As a result
\begin{align}
\label{eqn:bound_constant}
    \int_{1 - \frac{A_1}{|R|}}^1f_{\overline{\gamma}}'(p)dp \le \int_{1 - \frac{A_1}{|R|}}^12cdp = 2c\frac{A_1}{|R|}\le 2c.
\end{align}
Combining the two inequalities \eqref{eqn:bound_multiplicative} and \eqref{eqn:bound_constant} gives
\[E[T\mid\overline{\gamma}] \le 8cE[T\mid\gamma^*] + 2c.\]
\begin{theorem}
    Given a $c$-approximation to the quota lawn mowing problem. Let $\gamma^*$ be an optimal route for minimizing the expected detection time in a polygonal region $R$ under the lawn mowing search mechanism, in pseudopolynomial time, we can compute a route $\overline{\gamma}$ satisfying $E[T\mid\overline{\gamma}] \le 8cE[T\mid\gamma^*] + 2c$ in pseudopolynomial time.
\end{theorem}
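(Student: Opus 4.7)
The plan is to construct $\overline{\gamma}$ as a concatenation of quota lawn mowing tours for a geometrically increasing sequence of area quotas, then bound $E[T\mid\overline{\gamma}]$ using the ``flipped'' integral representation $\int_0^1 f_\gamma'(p)\,dp$ derived above. Concretely, for each $j = 1, \ldots, O(\log N)$, I would binary-search on the integer quotas in $\{1, \ldots, M|R|\}$ and invoke the given $c$-approximation to identify $\overline{A_j}$, the largest quota for which the approximation returns a tour of length at most $c\cdot 2^j$. The tour $\overline{\gamma}$ is then the concatenation of these $O(\log N)$ approximate tours, each traversed from and returning to the starting point $s$. The running time is that of the $c$-approximation multiplied by $O(\log N \cdot \log(M|R|)) = O(\log N \cdot (B + \log N))$, which is pseudopolynomial as required.

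The core of the analysis is a pointwise comparison of $f_{\overline{\gamma}}'$ to $f_{\gamma^*}'$. For a fixed $p$, suppose $j \ge 2$ satisfies $A_{j-1} < (1-p)|R| \le A_j$. The prefix of $\gamma^*$ that first reaches covered area $(1-p)|R|$ has length $f_{\gamma^*}'(p)$ by definition, and doubling it produces a closed tour covering this much area; by definition of $A_{j-1}$ this forces $2^{j-1} \le 2 f_{\gamma^*}'(p)$. Meanwhile, $\overline{\gamma}$'s prefix up to the moment covered area reaches $(1-p)|R|$ has length at most $\sum_{k \le j} c\cdot 2^k \le 8c \cdot 2^{j-2}$, so $f_{\overline{\gamma}}'(p) \le 8c\,f_{\gamma^*}'(p)$ throughout the range $(1-p) > A_1/|R|$.

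To conclude, I would split the integral defining $E[T\mid\overline{\gamma}]$ at $p = 1 - A_1/|R|$. On $[0, 1 - A_1/|R|]$ the pointwise bound integrates to at most $8c \int_0^1 f_{\gamma^*}'(p)\,dp = 8c\cdot E[T\mid\gamma^*]$, after extending the range of integration and renormalizing by $|R|$. On $[1 - A_1/|R|, 1]$, the length-$2c$ prefix of $\overline{\gamma}$ already covers area at least $A_1$, giving $f_{\overline{\gamma}}'(p) \le 2c$ and contributing at most $2c \cdot A_1/|R| \le 2c$. Adding the two contributions yields the claimed inequality.

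The step I expect to be most delicate is handling the boundary effects cleanly: the doubling argument breaks at $j = 1$ because there is no meaningful $A_0$, so the small-coverage regime $(1-p) > A_1/|R|$ must be dispatched separately and absorbed into the additive constant $2c$; and since the quota-to-$k$-TSP reduction inflates the effective instance by a factor $M = O(B)$, I need to verify that the binary search over $\{1, \ldots, M|R|\}$ preserves pseudopolynomiality. Both issues are essentially bookkeeping, but the proof reads most smoothly if the $j = 1$ slab is isolated at the outset so that the multiplicative estimate runs unimpeded on the remaining range.
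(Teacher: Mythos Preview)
Your proposal is correct and mirrors the paper's argument essentially step for step: the same binary-search construction of $\overline{A_j}$, the same doubling bound $2^{j-1}\le 2f'_{\gamma^*}(p)$, the same geometric sum $\sum_{k\le j}c\,2^k\le 8c\cdot 2^{j-2}$, and the same split of $\int_0^1 f'_{\overline{\gamma}}(p)\,dp$ at $p=1-A_1/|R|$ into an $8c$-multiplicative piece and a $2c$-additive piece. One small slip in your final paragraph: the regime that must be ``dispatched separately and absorbed into the additive constant'' is the \emph{small}-coverage interval $(1-p)\le A_1/|R|$, not $(1-p)>A_1/|R|$ as written; your third paragraph has this right, so it is just a typo.
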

\paragraph{Remark} The approximation bound we give includes both a multiplicative and an additive error term. We leave open the problem of removing the additive term. In practice, for a sufficiently large region $R$ which requires a long tour to cover a large area, $\frac{A_1}{|R|}$ should be close to 0.

\subsection{Visibility-based search mechanism}
We show that the idea of recursively doubling the search perimeter in the previous subsection extends also to the visibility-based search mechanism, which models a robot entering and navigating through a room with obstacles (e.g., pieces of furniture). The robot is equipped with an omnidirectional camera for sensing, and it detects the stationary target when vision of the target is established. One difference between the visibility-based search mechanism and the lawn mowing search mechanism is that the robot cannot go through obstacles (the notion of visibility is not well-defined from within obstacles), nevertheless this does not prevent us from employing the same idea as before, thanks to a known result on partial visibility-based search. 

In a polygonal region $R$, two points $x$ and $y$ \textit{see} each other if the line segment $xy$ does not intersect the exterior of $R$. The \textit{visibility region} of $x$, denoted $V(x)$, is the set of all points that $x$ sees. We also define the visibility region of a subset $S\subset R$ (for example, a route), which we denote by $V(S)$, to be the union of the visibility regions of all points in $S$ (refer to Figure~\ref{fig:coverage_watchman}). Efficient algorithms for computing the visibility region of points and segments in polygonal domains are known~\cite{guibas1986linear,heffernan1995optimal}.

\begin{figure}[H]
        \centering
        \includegraphics[width=0.75\textwidth]{
        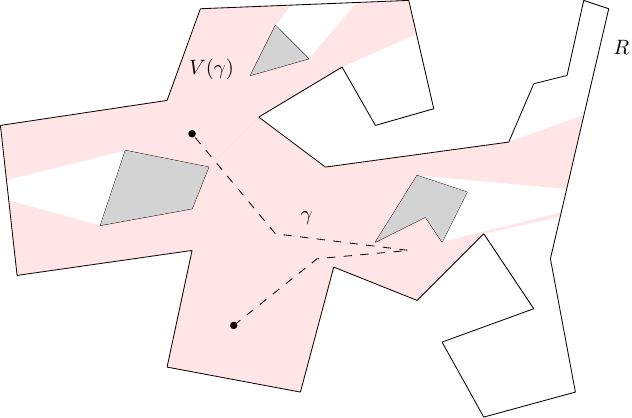}
        \caption{The area of coverage $V(\gamma)$ (red) by a robot with visibility-based detection.}
        \label{fig:coverage_watchman}
\end{figure}

Given a polygonal domain $R$. We want to compute a route $\gamma \subset R$ that minimizes the expected amount of time before the robot sees the target, which we also denote $E[T\mid\gamma]$ and define similarly to that of the lawn mowing mechanism. For visibility-based detection, it has been known that minimizing the expected time until detection is NP-hard in a polygon with holes~\cite{sarmiento2003efficient}. We strengthen the hardness result in the following theorem.

\begin{theorem}
    For visibility-based search, minimizing $E[T\mid\gamma]$ is NP-hard, even in a simple polygon.
\end{theorem}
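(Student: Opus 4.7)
The plan is to reduce from the Euclidean Minimum Latency Problem, mirroring the construction used in Theorem~\ref{thm:min_E_NP_hard} but adapting it to two visibility-specific constraints: the robot is now confined to $R$, and ``detection'' means establishing a line of sight rather than proximity.

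Given an MLP instance $p_1,\ldots,p_n$, I would first connect the points by a spanning tree $T$ and thicken each edge of $T$ into a very thin corridor of width $w$ centered on that edge. At each $p_i$, I attach a small square chamber $C_i$ of side length $\varepsilon$, whose entrance is a slit flanked by two reflex vertices placed so that the interior of $C_i$ is visible only from points lying within an $O(w)$-neighborhood of the slit. Branching points of $T$ are replaced by tiny convex junction gadgets, again with reflex corners, that prevent chambers on one branch from being seen through a junction onto another branch. Because the underlying graph is a tree, the resulting region $R$ is a simple polygon.

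I would then argue that for any tour $\gamma\subset R$, if $t_\gamma(i)$ denotes the first time $\gamma$ sees the interior of $C_i$, the construction guarantees that $t_\gamma(i)$ agrees with the first-visit time of $\gamma$ to $p_i$ up to an $O(\varepsilon)$ additive error, so $t_\gamma(i)$ matches the MLP latency of $p_i$. Writing
\[
|R|\cdot E[T\mid\gamma] \;=\; \sum_{i=1}^n |C_i|\, t_\gamma(i) \;+\; E_{\mathrm{corr}}(\gamma),
\]
the corridor term is bounded by $O(w\cdot|T|\cdot|\gamma|)$, which by choosing $w$ polynomially smaller than $\varepsilon^2/|T|^2$ becomes negligible next to the chamber contribution $\Theta(\varepsilon^2)\sum_i t_\gamma(p_i)$. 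Hence minimizing $E[T\mid\gamma]$ in $R$ is equivalent, up to lower-order terms, to minimizing $\sum_i t_\gamma(p_i)$ on the MLP input. A simple exchange argument, removing any backtracking along a corridor that exposes no new chamber, restricts attention to tree-respecting orderings of the chambers, matching the structure of an optimal MLP tour.

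The hard part will be the visibility bookkeeping: verifying that the reflex wedges at each slit actually hide the interior of $C_i$ from every point outside an $O(w)$-neighborhood of its entrance, and that the junction gadgets block every unintended long-range line of sight between chambers on different branches. Orienting each slit perpendicular to the incoming corridor and choosing the reflex offsets as small multiples of $w$ should suffice, but getting the constants and error bounds precise enough that the chamber term strictly dominates the corridor term for every valid tour will be the most delicate step. Once this local geometric analysis is settled, the polynomial-time reduction from MLP goes through exactly as in Theorem~\ref{thm:min_E_NP_hard}.
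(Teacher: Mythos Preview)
Your construction has a genuine gap at the choice of source problem. In Theorem~\ref{thm:min_E_NP_hard} the lawn-mowing robot may move \emph{outside} $R$, so it can travel the straight-line Euclidean segment between any two chambers, and the latencies in the polygon coincide with Euclidean latencies. Here you explicitly (and correctly) note that the visibility-based robot is confined to $R$. But $R$ is the thickened spanning tree $T$, so the distance inside $R$ between chambers $C_i$ and $C_j$ is, up to $O(\varepsilon)$, the \emph{tree distance} $d_T(p_i,p_j)$, not $\|p_i-p_j\|_2$. Hence $t_\gamma(i)$ matches the latency of $p_i$ in the tree metric, not the Euclidean metric, and an optimal Euclidean MLP tour---which may freely shortcut between branches---need not correspond to any tour realizable inside $R$. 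Your final exchange argument, which restricts to ``tree-respecting orderings,'' makes this mismatch explicit rather than resolving it.

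The clean fix, and what the paper actually does, is to change the source problem: reduce from the minimum latency problem on a \emph{weighted tree}, which is NP-hard by Sitters~\cite{sitters2002minimum}. One embeds the given abstract tree in the plane with straight edges of the prescribed lengths and then applies essentially your corridor-and-chamber construction. Now the metric inside the simple polygon is exactly the tree metric of the input instance, and the correspondence between polygon tours and tree tours is immediate. Your visibility bookkeeping (slits, reflex wedges, junction gadgets) is more careful than the paper's sketch and would still be needed; only the source of hardness has to be swapped.
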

\begin{proof}
    We once again show NP-hardness by a reduction from the minimum latency problem, but for a weighted tree, which is known to be NP-hard \cite{sitters2002minimum}. Given a weighted tree, create a straight-line embedding of the tree using any graph drawing algorithm (see~\cite{tamassia2013handbook} for a comprehensive survey about graph drawings) in polynomial time. Then, construct a simple polygon from the embedding, similar to the construction used in the proof of Theorem \ref{thm:min_E_NP_hard}. A tour within the polygon minimizing expected detection time yields a minimum latency tour.\hfill$\blacksquare$
\end{proof}

\subsubsection{The budgeted watchman route problem}
The budgeted watchman route problem was first introduced in \cite{huynh2024optimizing}. Given a polygon $P$ and a budget $B > 0$, compute a route $\gamma$ such that $|\gamma| \le B$ and the area of the visibility region of $\gamma$, $|V(\gamma)|$ is maximized.

The budgeted watchman route problem is (weakly) NP-hard, the following approximation algorithm is known~\cite{huynh2024optimizing}, which we require as a subroutine. 

\begin{lemma}
\label{lem:bwrp}
    \cite[Theorem 11]{huynh2024optimizing} Let $P$ be a simple polygon with $n$ vertices. Given a starting point $s$, in fully polynomial time (in $n$ and $\displaystyle\frac{1}{\varepsilon})$, we can compute a route of length no longer than $(1 + \varepsilon)B$ that passes through $s$ and sees an area no smaller than any route of length $B$ can see.
\end{lemma}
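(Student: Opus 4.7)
The plan is to reduce the budgeted watchman route problem to its dual, a \emph{quota watchman route} problem (analogous to the quota lawn mowing problem of Section 3): given a visibility target $A \le |P|$, compute the shortest route from $s$ whose visibility region has area at least $A$. A $(1+\varepsilon)$-approximation for the quota version combined with a binary search over $A$ yields the desired bicriteria result: pick the largest $A^*$ for which the approximation returns a route of length $\le (1+\varepsilon)B$; then $A^*$ is at least the area seen by any length-$B$ route, and the returned route witnesses it.

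To approximate the quota watchman route, first discretize the continuous search space. Take a polynomial-size set $W \subset P$ of candidate waypoints consisting of the vertices of the visibility arrangement of $P$, augmented by a grid of resolution $\Theta(\varepsilon \cdot \operatorname{diam}(P) / n)$ restricted to $P$. A standard snapping argument, using that small perturbations of route points change the visibility polygon only locally, shows that the optimal continuous route can be replaced by a route through $W$ losing at most a $(1+\varepsilon)$ factor in length and at most an $\varepsilon$ fraction of visible area. Simultaneously discretize the interior of $P$ into a pixel set $\mathcal{Q}$; using the linear-time visibility polygon algorithm~\cite{guibas1986linear}, record for each $w\in W$ the subset $V(w) \cap \mathcal{Q}$.

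On the discretized instance, the quota watchman route reduces to a geometric coverage-orienteering problem: find the shortest walk from $s$ through a subset $W' \subseteq W$ (with distances given by the geodesic metric inside $P$, precomputable via shortest-path maps) such that $|\bigcup_{w \in W'} V(w) \cap \mathcal{Q}| \ge A$. For points inside a simply-connected polygon, an Arora--Mitchell style dynamic programming over a portal-augmented quadtree~\cite{arora1996polynomial,mitchell1999guillotine} can be adapted, with each DP cell carrying, in addition to the usual portal-crossing signature, an $\varepsilon$-rounded summary of which portion of $\mathcal{Q}$ has been covered by the sub-route processed so far.

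The hard part will be handling the union-based (submodular) coverage objective inside the geometric PTAS framework, which is natively designed for additive per-node rewards rather than set-union rewards. The route to overcome this is to exploit the structure of visibility in a simple polygon: each $V(w)$ is itself a simple polygon, and its restriction to any quadtree cell has complexity $O(n)$, so coverage within a cell is representable by a bounded-complexity Boolean combination. Rounding the covered-area vector to multiples of $\varepsilon |P|/\operatorname{poly}(n)$ keeps the number of distinct DP states per cell polynomial in $n/\varepsilon$, yielding overall running time polynomial in $n$ and $1/\varepsilon$ as required.
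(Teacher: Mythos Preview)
The paper does not prove this lemma; it is quoted verbatim from~\cite{huynh2024optimizing} and used as a black box. So there is no ``paper's proof'' to compare against, only the question of whether your sketch actually establishes the result.

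Your reduction from the budgeted to the quota formulation via binary search is fine, as is the waypoint/pixel discretization in principle. The genuine gap is in the last step. An Arora--Mitchell dynamic program over a portal-respecting quadtree succeeds for $k$-TSP because the reward is \emph{additive} across subproblems: once you know how many points each sub-tour visits, you can combine them without further information. Visibility coverage is a union, hence submodular, and to combine two partial routes you must know \emph{which} pixels each one has already seen, not merely how many. Tracking the full covered set gives $2^{|\mathcal{Q}|}$ states; rounding each pixel's covered-area contribution to multiples of $\varepsilon|P|/\operatorname{poly}(n)$ does not collapse this, because the state space is still a vector indexed by pixels, not a scalar. Your remark that ``coverage within a cell is representable by a bounded-complexity Boolean combination'' addresses the wrong difficulty: the problem is not representing coverage inside one cell, it is avoiding double-counting when merging across cells, and for that a scalar area summary is insufficient while a per-pixel summary is exponential.

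The actual argument in~\cite{huynh2024optimizing} does not go through a generic geometric PTAS. It exploits structure specific to visibility in a \emph{simple} polygon: the region not yet seen is governed by a polynomial family of chords (windows/essential cuts), and the area seen is a monotone function of how far the route has advanced past each such chord. This turns the coverage bookkeeping into something one-dimensional per cut rather than a set-union over pixels, which is what makes a fully polynomial scheme possible. Your sketch would need a comparably structural observation to close the gap; as written, the DP state space is not polynomial.
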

\begin{theorem}
Let $\gamma^*$ be an optimal route for minimizing the expected detection time in a simple polygon $P$ under the visibility-based search mechanism. For $B = 2^j$ where $j= 1, \ldots, O(\log N)$ (recall that $N$ is the number of unit-sized integer pixels intersecting $P$), run the approximation algorithm for the budgeted watchman route problem in Lemma \ref{lem:bwrp} and concatenate the approximate tours into one, which we denote by $\overline{\gamma}$. Then $E[T\mid\overline{\gamma}] \le (8 + 8\varepsilon)E[T\mid\gamma^*] + (2 + 2\varepsilon)$ for any $\varepsilon > 0$.
\end{theorem}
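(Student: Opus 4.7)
The plan is to mimic the analysis from the lawn mowing section almost verbatim, with the budgeted watchman route approximation of Lemma \ref{lem:bwrp} playing the role of the $c$-approximation to the quota lawn mowing problem. First, I define $L(A)$ to be the length of the shortest watchman route that sees at least area $A$, and for $j=1,\ldots,O(\log N)$, let $A_j$ be the maximum area such that $L(A_j)\le 2^j$. The key observation is that because Lemma \ref{lem:bwrp} returns a route of length at most $(1+\varepsilon)2^j$ that sees at least as much area as any route of length $2^j$, its visibility area is at least $A_j$; so the concatenated tour $\overline{\gamma}$ is, for each $j$, a prefix that attains coverage $\ge A_j$ using length at most $(1+\varepsilon)2^j$ (plus lower-order terms from the return trips to $s$).

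Next, I reuse the axis-flipping identity from the previous subsection to write
\[E[T\mid\gamma] \;=\; \int_0^1 f'_\gamma(p)\,dp,\]
where $f'_\gamma(p)$ is the minimum prefix length of $\gamma$ after which the unseen fraction of $P$ is at most $p$. For any $p$ with $A_{j-1} < (1-p)|P| \le A_j$ and $j\ge 2$, doubling the prefix of $\gamma^*$ of length $f'_{\gamma^*}(p)$ yields a watchman route seeing area $(1-p)|P|>A_{j-1}$, so $2f'_{\gamma^*}(p) > 2^{j-1}$, i.e., $2^j < 4f'_{\gamma^*}(p)$. Meanwhile, the prefix of $\overline{\gamma}$ that first attains visibility $(1-p)|P|$ lies within the first $j$ concatenated pieces, whose total length is at most $\sum_{k\le j}(1+\varepsilon)2^k \le 2(1+\varepsilon)2^j$. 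Combining these bounds gives $f'_{\overline{\gamma}}(p) \le 8(1+\varepsilon)f'_{\gamma^*}(p)$ throughout this regime.

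Finally, I split the integral at $p_0 = 1 - A_1/|P|$. The multiplicative regime $p\in[0,p_0]$ contributes at most $8(1+\varepsilon)\int_0^1 f'_{\gamma^*}(p)\,dp = 8(1+\varepsilon)E[T\mid\gamma^*]$, while on $p\in[p_0,1]$ we have $f'_{\overline{\gamma}}(p)\le 2(1+\varepsilon)$, because traveling distance $2(1+\varepsilon)$ along $\overline{\gamma}$ already accumulates visibility at least $A_1$; thus this range contributes at most $2(1+\varepsilon)\cdot A_1/|P|\le 2(1+\varepsilon)$. Summing yields the claimed bound $E[T\mid\overline{\gamma}] \le (8+8\varepsilon)E[T\mid\gamma^*] + (2+2\varepsilon)$.

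The main obstacle I anticipate is the translation step: Lemma \ref{lem:bwrp} is phrased as a length guarantee (length $\le(1+\varepsilon)B$) on a visibility-maximizing tour, not as a quota-style guarantee, so I must argue carefully that its output sees area at least $A_j$ when $B=2^j$. Once that bridging lemma is in place, the rest of the argument --- the doubling-a-subpath trick for lower-bounding $L((1-p)|P|)$, the order-reversing integration, and the split into a multiplicative and an additive term --- is a direct port of the lawn mowing proof, since the only properties of the coverage function used are monotonicity in the prefix length and the existence of a constant-factor length penalty for converting a path into a tour.
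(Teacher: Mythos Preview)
Your proposal is correct and is exactly the approach the paper intends: its own proof is the single sentence ``the analysis follows through the same way as for the lawn mowing search mechanism,'' and you have faithfully instantiated that analysis with $c$ replaced by $(1+\varepsilon)$. The bridging step you flag as an obstacle---that Lemma~\ref{lem:bwrp} with budget $B=2^j$ yields a route seeing area at least $A_j$---is immediate from the definition of $A_j$ and the guarantee of the lemma, just as you argue.
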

\begin{proof}
    The analysis follows through the same way as for the lawn mowing search mechanism. The running time is equal to that of the approximation algorithm to the budgeted watchman problem multiplied by $O(\log N)$.\hfill$\blacksquare$
\end{proof}

\section{Simulation Results}
Algorithms developed to guarantee theoretical bounds in quota TSP-type problems are usually far too complicated and incur impractical running times. Thus, we complement our theoretical analysis with some experimental heuristics that are relatively straightforward to implement. Figure~\ref{fig:polygon_11} shows our test polygons with holes in which we want to search for a target with a square scanner and rectilinear movement. 

\begin{figure}[H]
    \hspace{-1.5cm}\includegraphics[width=0.6\textwidth]{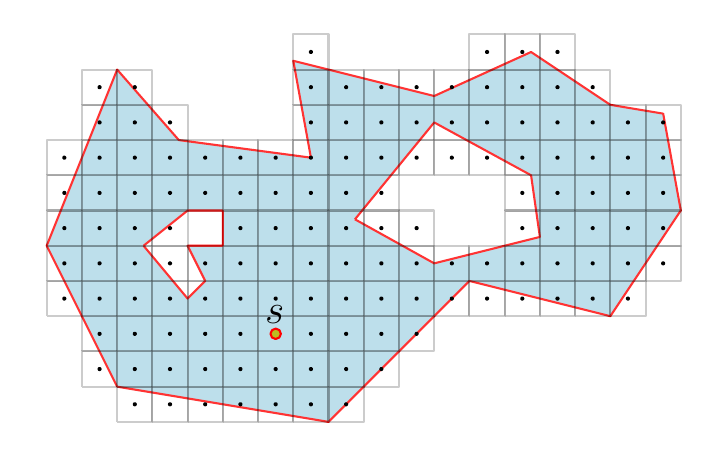}\includegraphics[width=0.72\textwidth]{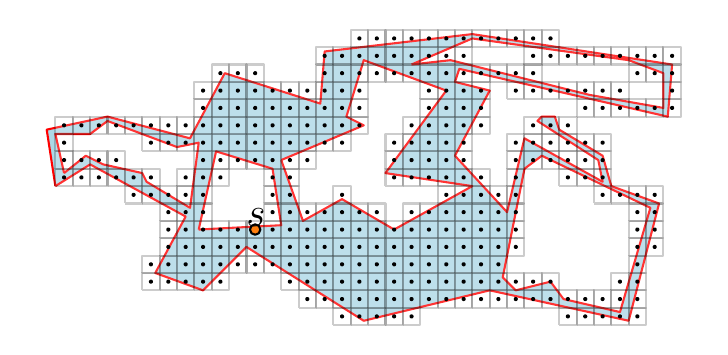}
    \caption{The test polygonal regions, with the starting point $s$.}
    \label{fig:polygon_11}
\end{figure}

Our code is written in Python~\cite{van1995python}, we use the following packages: Shapely~\cite{shapely2007} for processing and handling the geometry of the domain and NetworkX~\cite{hagberg2008exploring} for handling the graph-theoretical components of the heuristic. Given below is a high-level description of what we call the exponential tree heuristic.
\begin{algorithm}[H]
\caption{Exponential Tree Heuristic}\label{alg:heuristic}
$j = 0$\;

visited = \{\}\;

\textbf{while} $|\text{visited}| \le N$
{

    \hspace*{0.5cm}Initialize the tree $T$ = [starting center point]\;
    
    \hspace*{0.5cm}\textbf{while} $|T| \le \min\{2^j, N\}$
    {
    
    \hspace*{1cm}$T\leftarrow T\cup v$ where $v$ is adjacent to a center point in $T$ and $r(v) \rightarrow \max$\;
    
    \hspace*{1cm}Traverse the TSP tour on the nodes in $T\setminus\{v\mid v\ne \text{starting point}, v\in \text{visited} \}$\;
    
    \hspace*{1cm} visited = visited $\cup$ $T$ \;
    
    \hspace*{1cm}$j \leftarrow j + 1$\;
    }
}
\end{algorithm}

The main idea is to greedily pick a subset of center points that maximize the total reward, while ensuring the length of the shortest tour through them is no larger than $2^j$ for $j = 0, 1, \ldots$ The center points should induce a connected graph in $\mathcal{G}$, and their spanning tree should have cost no greater than $2^j$, making the TSP tour on them no longer than $2^{j+1}$. By discarding the center points already visited (by previous tours of smaller maximum sizes) and computing only the TSP tour on the remaining ones, the running time as well as the detection time is improved by a margin.

It is quite infeasible to find an optimal minimum latency tour by brute force, even for small instances, due to the factorial number of possible tours. We thus compare the mean detection time of the route generated by our heuristic with that of a heuristic designed to minimize the total latency tour on all the center points. First, note that it suffices, for purposes of approximating the minimum latency tour within a factor of $(1 + \varepsilon)$, to concatenate a number of TSP paths with a geometrically decreasing sequence of number of nodes ~\cite{arora2003approximation}. Thus, we find the overall TSP tour on $N$ center points, replace the subpath on the first $\displaystyle\left\lfloor\frac{\varepsilon N}{1 + \varepsilon}\right\rfloor$ center points with the TSP path on those points, then repeat with the next $\displaystyle\left\lfloor\frac{\varepsilon N}{(1 + \varepsilon)^2}\right\rfloor, \left\lfloor\frac{\varepsilon N}{(1 + \varepsilon)^3}\right\rfloor, \ldots$ points. We set $\varepsilon = 0.01$ to get close to the minimum latency tour.

We perform our experiment with 1000 different random generations of the location of the target, the results are given in Table~\ref{tab1}.

\begin{table}
\caption{Detection time comparisons between the exponential tree heuristic and the minimum latency heuristic.}\label{tab1}\centering
\begin{tabular}{|l|l|l|l|}
\hline
1st test polygonal region &  mean & standard deviation & running time\\
\hline
Exponential Tree Heuristic & 112.081 & 70.333 & 1.308 \\ 
 Minimum Latency Heuristic & 69.577 & 42.764 & 8.980 \\
\hline
\end{tabular}
\vspace{0.5cm}

\begin{tabular}{|l|l|l|l|}
\hline
2nd test polygonal region &  mean & standard deviation & running time\\
\hline
Exponential Tree Heuristic & 233.745 & 151.886 & 4.843 \\ 
 Minimum Latency Heuristic & 183.167 & 119.555 & 52.360 \\
\hline
\end{tabular}
\end{table}

The exponential tree heuristic seems to yield average detection time within a constant factor of that of the minimum latency heuristic, but has faster running time. This demonstrates that gradually expanding the search area is an effective strategy for finding a target within a reasonable expected time. The search routes generated by the exponential tree heuristic are given in Figures~\ref{fig:route_heuristic_2} and~\ref{fig:route_heuristic_1}.

\begin{figure}[h]
\hspace{-2.5cm}\includegraphics[width=0.65\textwidth]{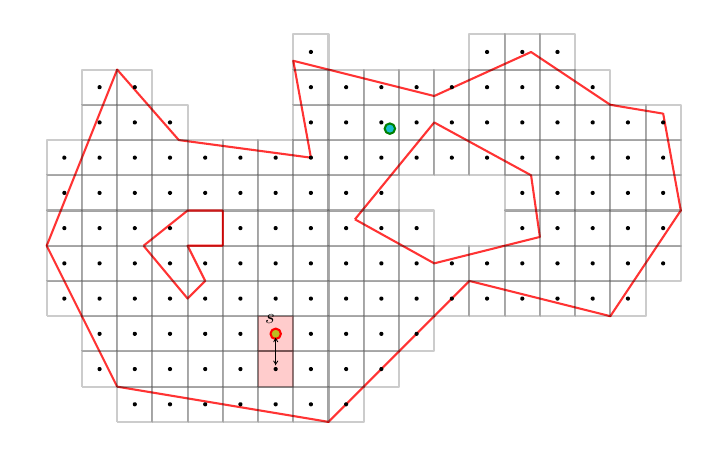}\includegraphics[width=0.65\textwidth]{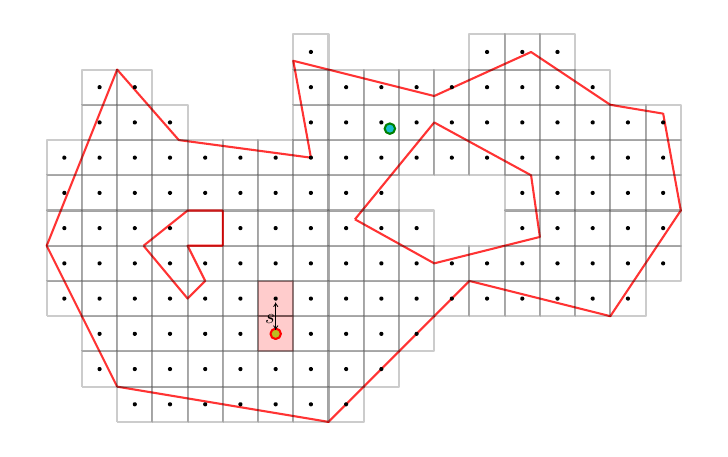}\\
\hspace*{-2.5cm}\includegraphics[width=0.65\textwidth]{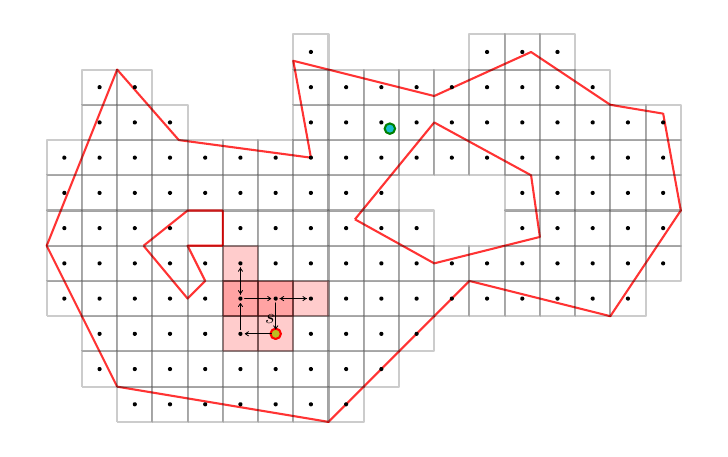}\includegraphics[width=0.65\textwidth]{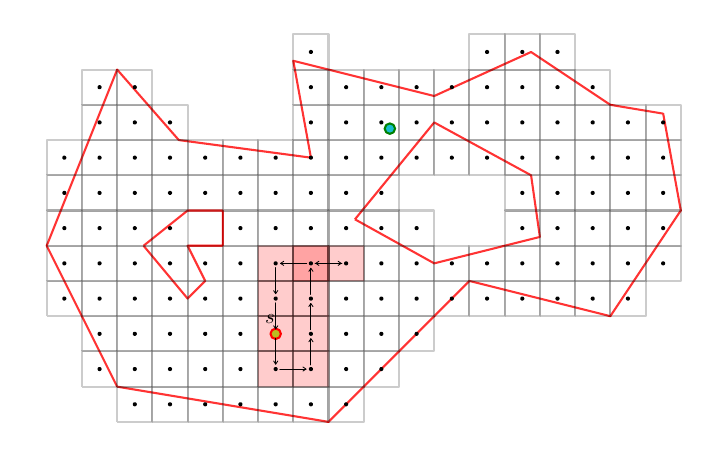}\\
\hspace*{-2.5cm}\includegraphics[width=0.65\textwidth]{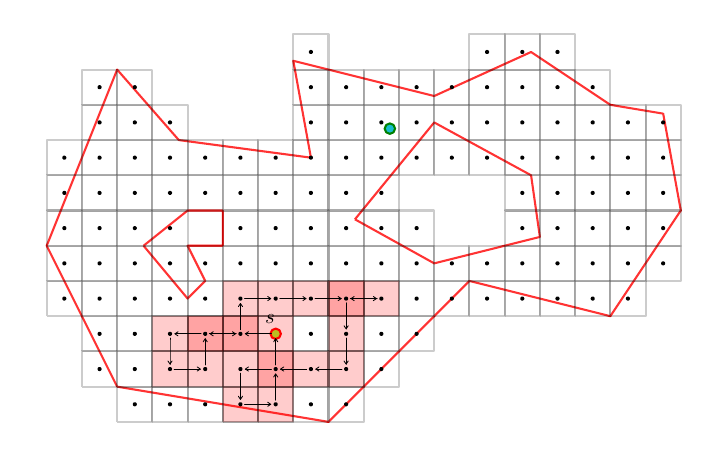}\includegraphics[width=0.65\textwidth]{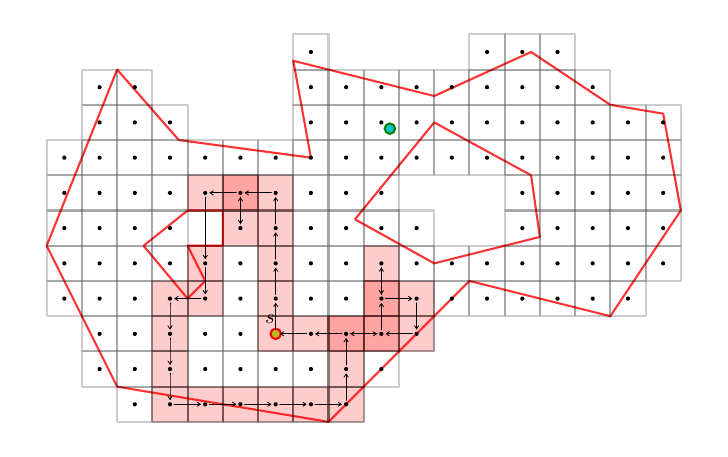}\\
\hspace*{-2.5cm}\includegraphics[width=0.65\textwidth]{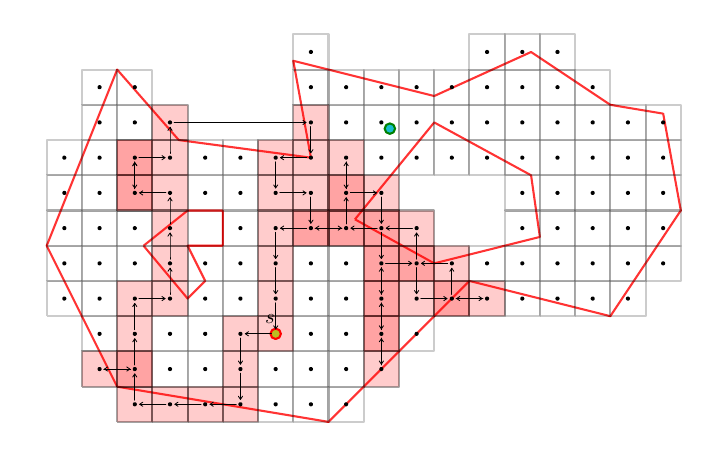}\includegraphics[width=0.65\textwidth]{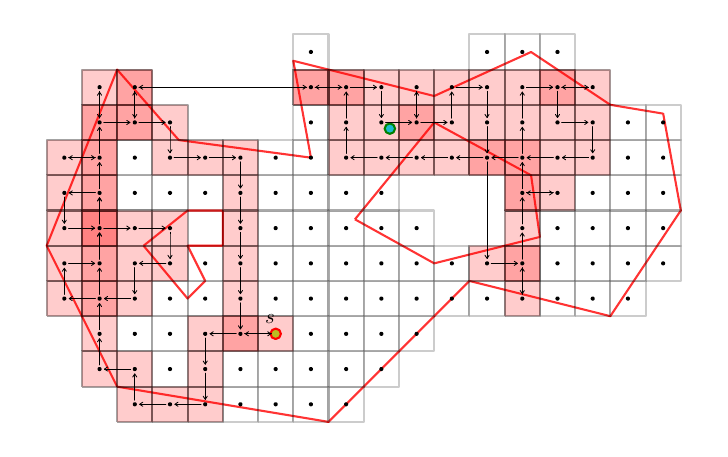}\\
    \caption{The search route, which consists of tours of increasing lengths, generated for the first test polygonal region by the exponential tree heuristic. The target is drawn in green.}
    \label{fig:route_heuristic_2}
\end{figure}

\begin{figure}[h]
\hspace{-2.5cm}\includegraphics[width=0.7\textwidth]{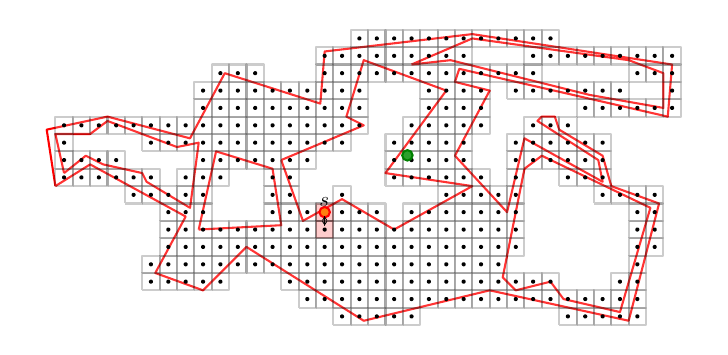}\includegraphics[width=0.7\textwidth]{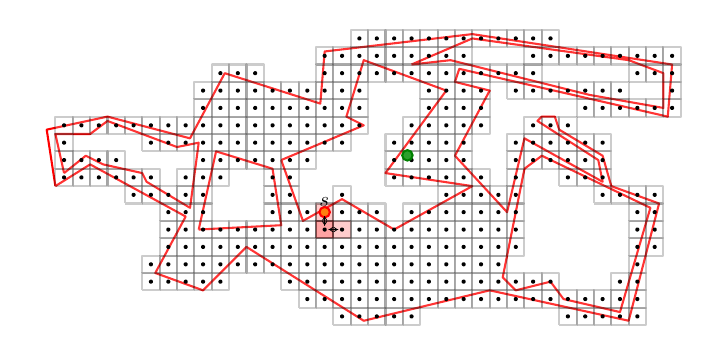}\\
\hspace*{-2.5cm}\includegraphics[width=0.7\textwidth]{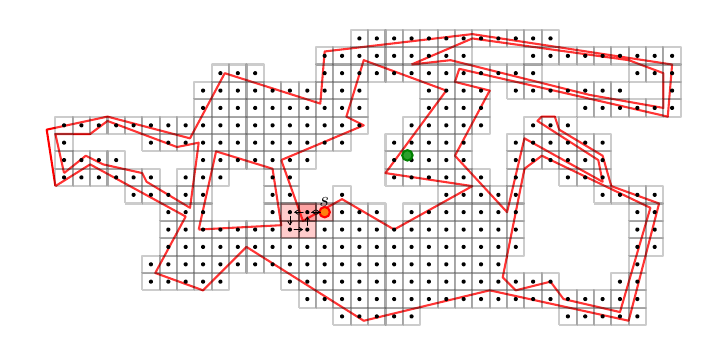}\includegraphics[width=0.7\textwidth]{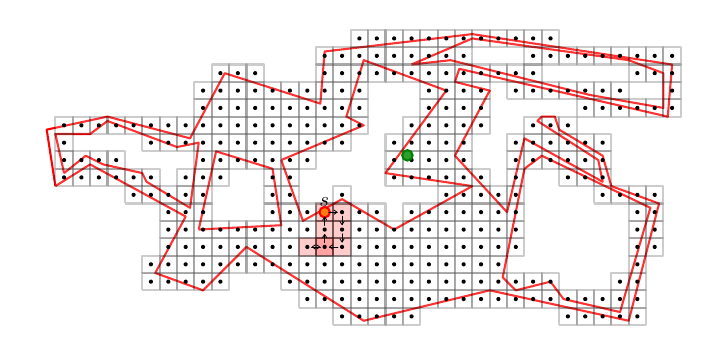}\\
\hspace*{-2.5cm}\includegraphics[width=0.7\textwidth]{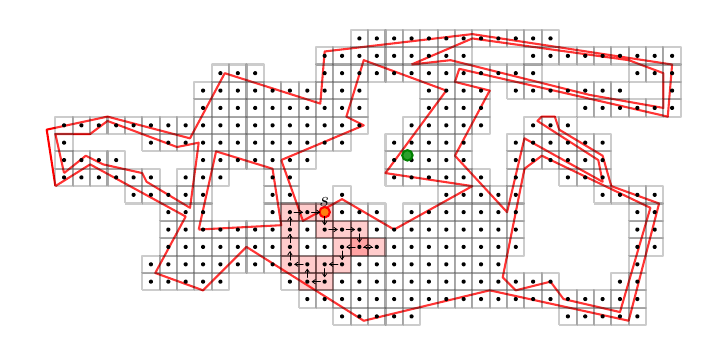}\includegraphics[width=0.7\textwidth]{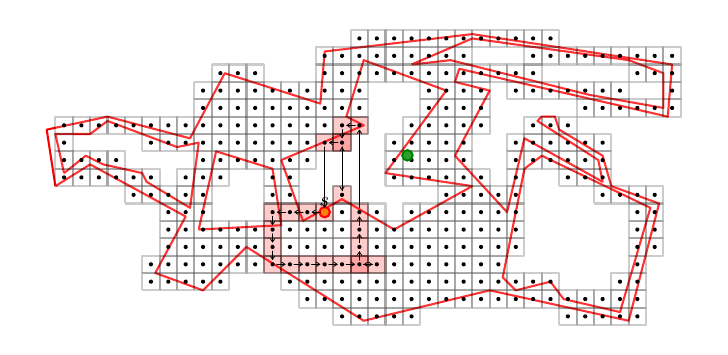}\\
\hspace*{-2.5cm}\includegraphics[width=0.7\textwidth]{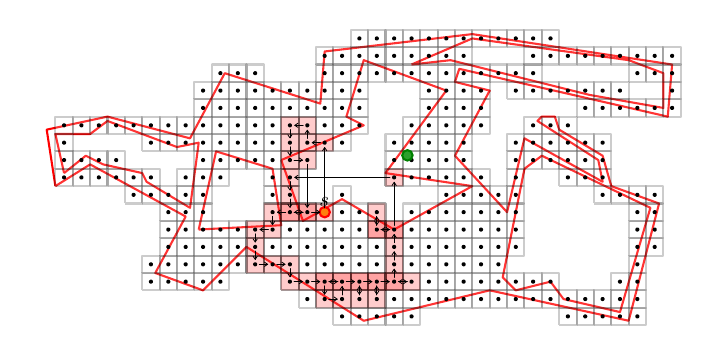}\includegraphics[width=0.7\textwidth]{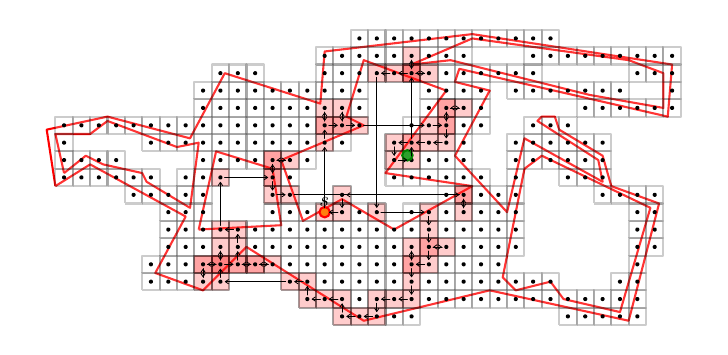}\\
    \caption{The search route, which consists of tours of increasing lengths, generated for the second test polygonal region by the exponential tree heuristic. The target is drawn in green.}
    \label{fig:route_heuristic_1}
\end{figure}

\begin{credits}
\subsubsection{\ackname} This work is partially supported by the National Science Foundation (CCF-2007275). The author would like to express his gratitude to his advisor, Joseph S. B. Mitchell for incredibly helpful and encouraging discussions on the paper.

\subsubsection{\discintname} The author has no competing interests to declare that are
relevant to the content of this article.
\end{credits}
%
%
%
%

\bibliographystyle{splncs04}
\bibliography{refs.bib} 
\end{document}